\newtheorem{assumption}{Assumption}
\newtheorem{theorem}{Theorem}
\newtheorem{lemma}{Lemma}
\newtheorem{problem}{Problem}
\def\BibTeX{{\rm B\kern-.05em{\sc i\kern-.025em b}\kern-.08em
    T\kern-.1667em\lower.7ex\hbox{E}\kern-.125emX}}
\begin{document}

\title{An Exploration-free Method for a Linear Stochastic Bandit Driven by a Linear Gaussian Dynamical System
\\
\thanks{}
}

\author{Jonathan Gornet \IEEEmembership{Student Member, IEEE}, Yilin Mo \IEEEmembership{Member, IEEE}, and Bruno Sinopoli \IEEEmembership{Fellow, IEEE}
\thanks{J. Gornet and B. Sinopoli are with the Department of Electrical and Systems Engineering, Washington University in St. Louis, St. Louis, MO 63130, USA (email: jonathan.gornet@wustl.edu; bsinopoli@wustl.edu).}
\thanks{Yilin Mo is with the Department of Automation, Tsinghua University,
        Beijing, China 100084 (email: ylmo@tsinghua.edu.cn). }%
}



\maketitle

\begin{abstract}

In stochastic multi-armed bandits, a major problem the learner faces is the trade-off between exploration and exploitation. Recently, exploration-free methods---methods that commit to the action predicted to return the highest reward---have been studied from the perspective of linear bandits. In this paper, we introduce a linear bandit setting where the reward is the output of a linear Gaussian dynamical system. Motivated by a problem encountered in hyperparameter optimization for reinforcement learning, where the number of actions is much higher than the number of training iterations, we propose Kalman filter Observability Dependent Exploration (KODE), an exploration-free method that utilizes the Kalman filter predictions to select actions. Our major contribution of this work is our analysis of the performance of the proposed method, which is dependent on the observability properties of the underlying linear Gaussian dynamical system. We evaluate KODE via two different metrics: regret, which is the cumulative expected difference between the highest possible reward and the reward sampled by KODE, and action alignment, which measures how closely KODE's chosen action aligns with the linear Gaussian dynamical system's state variable. To provide intuition on the performance, we prove that KODE implicitly encourages the learner to explore actions depending on the observability of the linear Gaussian dynamical system. This method is compared to several well-known stochastic multi-armed bandit algorithms to validate our theoretical results. 

\end{abstract}

\begin{IEEEkeywords}
Non-stationary Stochastic Multi-armed Bandits, Kalman filters, Stochastic Dynamical Systems
\end{IEEEkeywords}

\section{Introduction}

Decision-making under uncertainty is an important real-world problem.
Previous work that has rigorously focused on this problem includes the Stochastic Multi-Armed Bandit (SMAB) problem, which consists of the interaction between a learner and an environment \cite{lattimore2020bandit}. For each interaction, called a round, the learner selects an action and in response the environment samples the reward from a distribution dependent on the chosen action. The learner's goal is to maximize the accumulated reward for a horizon length, which is the total number of interactions between the learner and the environment. The metric for measuring performance of a proposed method in SMAB is regret, which is the cumulative expected difference between the highest possible reward and the sampled reward from the method's chosen action for each round. 

The SMAB commonly encounters the problem of \textit{exploration}---gathering information about the actions---versus \textit{exploitation}---committing to actions that are predicted to return the highest reward. A well-known principle for addressing this problem is the principle of optimism in the face of uncertainty---the learner assumes that the reward output of each action is the highest plausible predicted reward given a set confidence level---which was implemented in the Upper Confidence Bound (UCB) algorithm in \cite{agrawal1995sample}. More recently, results in SMAB have focused on exploration-free methods, or methods that refrain from exploring all the actions explicitly. These methods are motivated by situations where the number of actions is greater than the number of rounds in \cite{bayati2020unreasonable}. Additionally, in some linear bandit environments---where the reward is the inner product of an unknown parameter vector and a known chosen action vector---changes in the environment can induce exploration even in exploration-free methods. For example, an adversary changing the unknown parameter vector each round promotes the exploration-free method to sample uncertain actions, as shown in \cite{kannan2018smoothed}. Another example is when the unknown parameter vector stochastically changes each round, driving exploration, which is analyzed in \cite{bastani2021mostly}. This has led to the development of methods that adaptively choose when to explore such as \cite{pmlr-v108-hao20b}. The work above highlights the need for analyzing exploration-free methods in different SMAB environments. 

Inspired by a classical problem of hyperparameter optimization in machine learning (see HyperBand in \cite{li2017hyperband}), in this paper we propose and analyze an exploration-free method for a linear bandit environment where the reward is the output of a known Linear Gaussian Dynamical System (LGDS). In other words, the reward is the inner product of an action vector and the LGDS state variable. By proposing this environment, we envision that the results can be used for hyperparameter optimization during training of reinforcement learning neural networks as the authors \cite{parker2020provably}---inspired by theoretical results in \cite{bogunovic2016time}---have modeled the problem similarly to a LGDS. In this environment, the number of actions---where an action is a hyperparameter configuration---is much larger than the number of training iterations, making it intractable to explore all the actions. 

The paper's contributions are as follows. We propose a linear bandit SMAB problem where the reward is the output of a known LGDS---inner product of an action vector and the LGDS state variable. We assume that the actions, which are a set of vectors, have a $\ell_2$ norm of 1. Our main contribution is the analysis of our proposed exploration-free method, called the Kalman filter Observability Dependent Exploration (KODE), which chooses actions that align most closely with the Kalman filter state prediction. KODE's performance is first analyzed via two different metrics. The first analysis is the performance with respect to regret, which is the cumulative expected difference between the optimal reward and the sampled reward based on the learner's chosen action. The following analysis focuses on how closely the learner's action aligns with the optimal action at each round, which is dependent on the LGDS parameters. Following the performance analysis, we discuss how KODE has an implicit exploration term that perturbs the learner to explore depending on the observability of the LGDS. We provide numerical results validating KODE's performance based on our theoretical predictions. 

The layout of the paper is as follows. For Section \ref{sec:Problem Formulation}, the proposed environment is formulated. For Section \ref{sec:Myopic}, we introduce KODE. Subsection \ref{sec:performance} will provide performance analyses of KODE and Subsection \ref{subsec:greedy_performance} will discuss how properties of the LGDS promote exploration in KODE. Numerical results will be presented in Section \ref{sec:numerical_results}. The paper is concluded with discussions of future directions in Section \ref{sec:Conclusion}. 

\subsection{Related Work}

In this paper, the proposed environment is based on the linear bandit. In linear bandits, recent studies have developed algorithms for when the unknown parameter vector changes such as \cite{cheung2019learning,zhao2020simple,kuroki2024best}. In \cite{cheung2019learning}, the authors address the case when the changes in the unknown parameter vector are bounded by proposing a sliding-window algorithm that uses 
the samples collected within a window for estimating the unknown parameter vector. For \cite{zhao2020simple}, the environment changes slightly where now the norm of the unknown parameter vector is bounded. The algorithm proposed in \cite{zhao2020simple} restarts after a set number of rounds. Finally, for \cite{kuroki2024best}, the authors tackle the adversarial and stochastic case of the linear bandit, where the adversarial case is when the unknown parameter vector is chosen by an adversary prior to each learner's interaction. In this work, the authors used a Follow-the-Regularized Leader approach, which is when the learner chooses actions based on what was the optimal action the previous round with a regularizer \cite{lattimore2020bandit}. 

A variation of the SMAB relevant to this paper is the non-stationary SMAB problem, where the distribution of the reward for each action can change each round. The authors in \cite{besbes2014stochastic} developed an algorithm that considers the case when the changes in the reward distribution are subjected to a budget. In \cite{slivkins2008adapting}, the authors analyze a number of bandit algorithms applied to an environment where the rewards are generated by Brownian motion. The results are extended to an autoregressive $s$-step process in \cite{chen2023non}.

\smallskip
\noindent\textbf{Notation:} For any $x\in\mathbb{R}^n$ and $y\in\mathbb{R}^n$, we have the inner product $\left\langle x, y\right\rangle = x^\top y \in \mathbb{R}$. 

\section{Problem Formulation}\label{sec:Problem Formulation}

In this paper, we will consider the linear bandit with an unknown parameter vector driven by a known LGDS. For background, the linear bandit has the following expression for the reward $X_t \in \mathbb{R}$:
\begin{equation}
    X_t = \left\langle a_t, z \right\rangle + \eta_t \nonumber,
\end{equation}
where $a_t \in \mathcal{A} \subseteq \mathbb{R}^d$ is the action, $z \in \mathbb{R}^d$ is an unknown parameter vector, and $\eta_t \in \mathbb{R}$ is zero-mean noise. For our proposed environment, we will set the unknown parameter vector $z$ to be $z_t$ where $z_t$ dynamically changes according to the state of a LGDS. This leads to our proposed linear bandit environment which is formulated below. 
\begin{equation}\label{eq:known_LGDS}
    \begin{cases}
        z_{t+1} & = \Gamma z_t + \xi_t, ~ z_0 \sim \mathcal{N}\left(0,\Sigma_0\right)\\
        X_t & = \left\langle a_t, z_t \right\rangle + \eta_t
    \end{cases}, 
\end{equation}

In the LGDS, $z_t \in \mathbb{R}^d$ is the LGDS's state variable which is unknown and unobserved by the learner. The reward $X_t$ is observed by the learner and is the inner product of an action vector $a_t \in \mathcal{A}$ and the LGDS state variable $z_t$. The process noise $\xi_t \in \mathbb{R}^d$ and measurement noise $\eta_t \in \mathbb{R}$ are independent Gaussian noises where $\xi_t \sim \mathcal{N}\left(\mathbf{0},Q\right), Q \succeq 0$ and $\eta_t \sim \mathcal{N}\left(0,\sigma^2\right), \sigma > 0$. The measurement noise $\eta_t$ is sampled after the action $a_t \in \mathcal{A}$ is chosen. It is assumed that the learner knows the LGDS state matrix $\Gamma \in \mathbb{R}^{d \times d}$, action space $\mathcal{A}$, and the statistics of the noise terms $\xi_t \in \mathbb{R}^d$ and $\eta_t\in \mathbb{R}$---learner knows $Q$ and $\sigma^2$. We denote a learner as the entity that is interacting with the LGDS \eqref{eq:known_LGDS}. For the LGDS \eqref{eq:known_LGDS}, we pose the following assumptions: 
\begin{assumption}\label{assumption:unit_sphere}
    The action $\mathcal{A} \subseteq \mathbb{R}^d$ is constrained to the unit sphere $\mathbb{S}^{d-1}$, i.e. 
    \begin{equation}
        \mathcal{A} \subseteq \mathbb{S}^{d-1} \triangleq \left\{a \mid \left\Vert a \right\Vert_2 = 1, a \in \mathbb{R}^d\right\} \nonumber. 
    \end{equation}

    The set $\mathcal{A}$ also contains only $k > 0$ actions, where $k$ is a known parameter. 
\end{assumption}

Assumption \ref{assumption:unit_sphere} is posed to focus on the observability of the LGDS \eqref{eq:known_LGDS}. The LGDS \eqref{eq:known_LGDS} is defined to be observable for rounds $t_0$ to $t_1$ if the \textit{Observability Gramian} matrix $\mathcal{O}\left(\Gamma,t_0,t_1\right)$ is positive definite \cite{chen1995linear}, where the \textit{Observability Gramian} is defined to be 
\begin{equation}\label{eq:observability_gramian}
    \mathcal{O}\left(\Gamma,t_0,t_1\right) \triangleq \sum_{\tau=t_0}^{t_1} \left(\Gamma^\top\right)^\tau a_\tau a_\tau^\top \Gamma^\tau \in \mathbb{R}^{d \times d}. 
\end{equation}

Based on the definition of observability with respect to the \textit{Observability Gramian}, observability is invariant to the magnitude of the action vectors $a \in \mathcal{A}$, unless the magnitude is 0. 

\begin{assumption}\label{assumption:controllability}
    The matrix pair $\left(\Gamma,Q^{1/2}\right)$ is controllable. Controllability implies that the following matrix is rank $d$ \cite{chen1995linear}: 
    \begin{equation}
        \begin{pmatrix}
            Q^{1/2} & \Gamma Q^{1/2} & \dots & \Gamma^{d-1} Q^{1/2}
        \end{pmatrix} \nonumber. 
    \end{equation}
\end{assumption}

Assumption \ref{assumption:controllability} is required to ensure the existence of the Kalman filter, where the Kalman filter is the optimal one-step predictor of the state variable $z_t$. Keep in mind that Assumption \ref{assumption:controllability} implies every element in LGDS's state vector $z_t$ is excited by the process noise $\xi_t$. 

The following problem is posed for addressing the SMAB environment with a known LGDS \eqref{eq:known_LGDS}:
\begin{problem}\label{problem_formulation}
    Let the following LGDS \eqref{eq:known_LGDS} be posed with Assumptions \ref{assumption:unit_sphere} and \ref{assumption:controllability}. Design a policy $\pi_t$ such that the learner minimizes regret $R_n$, which is defined to be 
    \begin{equation}\label{eq:regret_definition}
        R_n \triangleq \sum_{t=1}^n \mathbb{E}_{\pi_t}\left[X_t^* - X_t\right], 
    \end{equation}
    where $n > 0$ is an unknown parameter, $X_t^*$ is the highest possible reward at round $t$, and $X_t$ is the sampled reward based on the policy $\pi_t$'s chosen action at round $t$. 
\end{problem}

A key issue for addressing Problem \ref{problem_formulation} is that the LGDS state variable $z_t$ is unknown; therefore, the future reward for each action $a \in \mathcal{A}$ is unknown. We propose to predict the state $z_t$ to predict the future reward for each action $a \in \mathcal{A}$. To accomplish this, we will use the optimal one-step predictor (in the mean-squared error sense) of the LGDS state $z_t$, the Kalman filter. The Kalman filter can be expressed in the one-step predictor form below:
\begin{equation}\label{eq:Kalman_Filter}
    \begin{cases}
        \hat{z}_{t+1|t} & = \Gamma \hat{z}_{t|t-1} + \Gamma K_t \left(X_t - \left\langle a_t,\hat{z}_{t|t-1}\right\rangle \right) \\
        P_{t+1|t} & = g\left(P_{t|t-1},a_t\right) \\
        K_t & = P_{t|t-1} a_t \left(a_t^\top P_{t|t-1}a_t + \sigma^2\right)^{-1}  \\
        \hat{X}_{t|t-1} & = \left\langle a_t, \hat{z}_{t|t-1}\right\rangle
    \end{cases}. 
\end{equation}

The variable $\hat{z}_{t|t-1} \triangleq \mathbb{E}\left[z_t \mid \mathcal{F}_{t-1}\right]$ and $\mathcal{F}_{t-1}$ is the the sigma algebra of previous rewards $X_0,\dots,X_{t-1}$. The error covariance matrix $P_{t|t-1}\triangleq \mathbb{E}\left[\left(z_t - \hat{z}_{t|t-1}\right)\left(z_t - \hat{z}_{t|t-1}\right)^\top \mid \mathcal{F}_{t-1}\right]$ is the output of the difference Riccati equation $g\left(P_{t|t-1},a_t\right)$ which is defined below 
\begin{multline}\label{eq:riccati_equation}
    g\left(P_{t|t-1},a_t\right) \triangleq \Gamma P_{t|t-1}\Gamma^\top + Q \\ - \Gamma P_{t|t-1} a_t\left(a_t^\top P_{t|t-1} a_t + \sigma^2\right)^{-1} a_t^\top P_{t|t-1} \Gamma^\top. 
\end{multline}

The following lemma provides known facts about the Kalman filter \cite{sinopoli2005optimal}:

\begin{lemma}\label{lemma:Kalman_Facts}
    The following facts are true for the Kalman filter \eqref{eq:Kalman_Filter}:
    \begin{itemize}
        \item $\mathbb{E}\left[e_{t|t-1}^\top \hat{z}_{t|t-1}\mid \mathcal{F}_{t-1}\right] = 0$. 
        \item $\mathbb{E}\left[z_t^\top S z_t\mid \mathcal{F}_{t-1}\right] = \hat{z}_{t|t-1}^\top S \hat{z}_{t|t-1} + \mbox{tr}\left(S P_{t|t-1}\right)$ for all $S \succeq 0$.
        \item $\mathbb{E}\left[\mathbb{E}\left[z_t \mid \mathcal{F}_t\right]\mid \mathcal{F}_{t-1}\right] = \mathbb{E}\left[z_t \mid \mathcal{F}_{t-1}\right]$. 
    \end{itemize}
\end{lemma}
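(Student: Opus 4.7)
The plan is to prove each fact in turn by combining the defining property $\hat{z}_{t|t-1} = \mathbb{E}[z_t \mid \mathcal{F}_{t-1}]$ with elementary properties of conditional expectation. The decomposition $z_t = \hat{z}_{t|t-1} + e_{t|t-1}$ into predictor plus prediction error will do most of the work, so I would first record that $\hat{z}_{t|t-1}$ is $\mathcal{F}_{t-1}$-measurable, whereas $\mathbb{E}[e_{t|t-1}\mid \mathcal{F}_{t-1}] = 0$ by direct calculation.

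For the first claim, I would pull the $\mathcal{F}_{t-1}$-measurable factor $\hat{z}_{t|t-1}$ outside the conditional expectation, reducing the left-hand side to $\mathbb{E}[e_{t|t-1}^\top \mid \mathcal{F}_{t-1}]\,\hat{z}_{t|t-1}$, which vanishes. For the second claim, I would expand
\begin{equation}
z_t^\top S z_t = \hat{z}_{t|t-1}^\top S \hat{z}_{t|t-1} + 2\hat{z}_{t|t-1}^\top S e_{t|t-1} + e_{t|t-1}^\top S e_{t|t-1} \nonumber,
\end{equation}
condition on $\mathcal{F}_{t-1}$, and treat the three terms separately. The first is $\mathcal{F}_{t-1}$-measurable and therefore passes through. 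The cross term vanishes by the first claim, applied with $S\hat{z}_{t|t-1}$ in place of $\hat{z}_{t|t-1}$, which is legitimate since $S$ is deterministic and $\hat{z}_{t|t-1}$ is $\mathcal{F}_{t-1}$-measurable. The remaining quadratic term rewrites as $\mbox{tr}\bigl(S\, \mathbb{E}[e_{t|t-1}e_{t|t-1}^\top \mid \mathcal{F}_{t-1}]\bigr)$ via the cyclic property of trace, and by definition of $P_{t|t-1}$ this equals $\mbox{tr}(S P_{t|t-1})$. The third claim is just the tower property of conditional expectation, invoked with $\mathcal{F}_{t-1} \subseteq \mathcal{F}_t$.

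None of the three steps should present any obstacle, since this lemma is purely a collection of standard Kalman filter orthogonality properties recorded for later use. The only minor subtlety is verifying the measurability condition that justifies invoking the first fact inside the proof of the second; this is immediate from the determinism of $S$ and the construction of $\hat{z}_{t|t-1}$ from past rewards.
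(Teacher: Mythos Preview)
Your argument is correct: each of the three facts follows exactly as you describe from the measurability of $\hat{z}_{t|t-1}$ with respect to $\mathcal{F}_{t-1}$, the vanishing of $\mathbb{E}[e_{t|t-1}\mid\mathcal{F}_{t-1}]$, the trace identity, and the tower property. Note, however, that the paper does not actually supply a proof of this lemma; it simply states the three items as known facts about the Kalman filter and cites \cite{sinopoli2005optimal}. Your write-up therefore provides the standard elementary justification that the paper omits, and there is no discrepancy in approach to discuss.
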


In this paper, we will approach Problem \ref{problem_formulation} by introducing a method that chooses actions $a_t \in \mathcal{A}$ that align most closely with the Kalman filter \eqref{eq:Kalman_Filter} state prediction $\hat{z}_{t+1|t}$. We will then analyze this method's performance. 

\section{An Exploration-Free Method}\label{sec:Myopic}

In this section, we propose Algorithm \ref{alg:greedy_method}, Kalman filter Observability Dependent Exploration (KODE), for solving Problem \ref{problem_formulation}. In the algorithm, action $a \in \mathcal{A}$ is chosen based on the following optimization problem:
\begin{equation}\label{eq:Greedy_Method}
    a_t = \underset{a \in \mathcal{A}}{\arg\max} \left\langle a, \hat{z}_{t|t-1} \right\rangle.
\end{equation}

\begin{algorithm}[!t]
\caption{Kalman filter Observability Dependent Exploration (KODE)}\label{alg:greedy_method}
 \begin{algorithmic}[1]
\STATE \textbf{Input}: $\Gamma$, $\mathcal{A}$, $Q$, $\sigma$, $P_{0|-1}$, $\hat{z}_{0|-1}$
\FOR{$t=1,2,\dots,n$}
    \STATE $a_t = \underset{a \in \mathcal{A}}{\arg\max} \left\langle a, \hat{z}_{t|t-1} \right\rangle$  \COMMENT{Select action} 
    \STATE Observe $X_t = \left\langle a_t, z_t \right\rangle + \eta_t$ 
    \STATE Update $\hat{z}_{t+1|t}$ and $P_{t+1|t}$ in the Kalman filter \eqref{eq:Kalman_Filter}
\ENDFOR
\end{algorithmic}
\end{algorithm}

In the optimization problem \eqref{eq:Greedy_Method}, the learner is choosing the action $a \in \mathcal{A}$ that aligns most closely with the Kalman filter prediction $\hat{z}_{t|t-1}$. This implies that the learner is ignoring how the sequence of actions impact the error of the Kalman filter state prediction. In this paper, we will prove how well KODE will perform in comparison to the \textit{Oracle} which is posed in Algorithm \ref{alg:Oracle}. In Algorithm \ref{alg:Oracle}, the \textit{Oracle} chooses actions as follows:
\begin{equation}\label{eq:Oracle}
    a_t^* = \underset{a \in \mathcal{A}}{\arg\max} \left\langle a, z_t \right\rangle. 
\end{equation}

Based on \eqref{eq:Oracle}, the \textit{Oracle} selects the action $a \in \mathcal{A}$ that aligns most closely with the LGDS state variable $z_t$. The \textit{Oracle} assumes knowledge of the LGDS state variable $z_t$, implying it cannot be used as an algorithm for solving Problem \ref{problem_formulation} but is a basis for measuring performance. 

\begin{algorithm}[!t]
\caption{\textit{Oracle}}\label{alg:Oracle}
 \begin{algorithmic}[1]
\STATE \textbf{Input}: $\Gamma$, $\mathcal{A}$, $Q$, $\sigma$, $\Sigma_0$, $z_0$
\FOR{$t=1,2,\dots,n$}
    \STATE $a_t = \underset{a \in \mathcal{A}}{\arg\max} \left\langle a, z_t \right\rangle$  \COMMENT{Select action} 
    \STATE Observe $X_t = \left\langle a_t, z_t \right\rangle + \eta_t$ and $z_t$
\ENDFOR
\end{algorithmic}
\end{algorithm}

\subsection{Performance Analysis}\label{sec:performance}

In the following theorem, we will prove an upper bound for regret $R_n$ defined in \eqref{eq:regret_definition} when using KODE. Next, we will provide a metric that proves when KODE will have a high probability of selecting the same action as the \textit{Oracle}.

\begin{theorem}\label{theorem:Greedy_Regret_Bound}
    Let each action $a \in \mathcal{A}$ be chosen according to optimization problem. Let $P_{\overline{a}}$ be the solution of the following algebraic Riccati equation $P_{\overline{a}} = g\left(P_{\overline{a}},\overline{a}\right)$ and $P_{\overline{a}} \succeq P_a$ for any $a \in \mathcal{A}$. Also assume that $P_{\overline{a}} \succeq P_{0|-1}$. The bound on regret $R_n$ defined as \eqref{eq:regret_definition} is 
    \begin{equation}\label{eq:Greedy_Regret_Bound}
        R_n \leq \max_{a,a' \in \mathcal{A}} n\sqrt{\frac{2\left(a - a'\right)^\top P_{\overline{a}}\left(a - a'\right)}{\pi}}. 
    \end{equation}
\end{theorem}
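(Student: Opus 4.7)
The plan is to bound the per-round regret $\mathbb{E}[X_t^*-X_t]$ uniformly in $t$ and sum over $n$ rounds. Because the observation noise $\eta_t$ is zero-mean and independent of all other round-$t$ random variables, the per-round regret reduces to $\mathbb{E}[\langle a_t^*-a_t,z_t\rangle]$. Decomposing $z_t=\hat{z}_{t|t-1}+e_{t|t-1}$ and invoking KODE's selection rule \eqref{eq:Greedy_Method}, which guarantees $\langle a_t,\hat{z}_{t|t-1}\rangle\geq\langle a_t^*,\hat{z}_{t|t-1}\rangle$ since $a_t^*\in\mathcal{A}$, I get $\langle a_t^*-a_t,\hat{z}_{t|t-1}\rangle\leq 0$ pointwise and hence
\[
\langle a_t^*-a_t,z_t\rangle\;\leq\;\langle a_t^*-a_t,e_{t|t-1}\rangle\;\leq\;\bigl|\langle a_t^*-a_t,e_{t|t-1}\rangle\bigr|\;\leq\;\max_{a,a'\in\mathcal{A}}\bigl|\langle a-a',e_{t|t-1}\rangle\bigr|.
\]

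Next I would compute the conditional expectation of the pairwise absolute value. Given $\mathcal{F}_{t-1}$, the prediction error $e_{t|t-1}$ is zero-mean Gaussian with covariance $P_{t|t-1}$, so for each fixed pair the random variable $\langle a-a',e_{t|t-1}\rangle$ is Gaussian with variance $(a-a')^\top P_{t|t-1}(a-a')$ and expected absolute value $\sqrt{(2/\pi)(a-a')^\top P_{t|t-1}(a-a')}$. Taking the worst-case pair then yields a per-round bound written purely in terms of $P_{t|t-1}$.

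To pass from $P_{t|t-1}$ to $P_{\overline{a}}$, I would use the three hypotheses: $P_{\overline{a}}=g(P_{\overline{a}},\overline{a})$, $P_{\overline{a}}\succeq P_a$ for every $a\in\mathcal{A}$, and $P_{\overline{a}}\succeq P_{0|-1}$. Together with the monotonicity of the Riccati operator $g(\cdot,a)$ in its first argument, these let me prove by induction on $t$ that $P_{t|t-1}\preceq P_{\overline{a}}$ for every admissible action sequence. In the positive semidefinite order this gives $(a-a')^\top P_{t|t-1}(a-a')\leq(a-a')^\top P_{\overline{a}}(a-a')$ for every pair, and substituting and summing from $t=1$ to $n$ delivers \eqref{eq:Greedy_Regret_Bound}.

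The main obstacle is the second step: the pair $(a_t^*,a_t)$ is random and $a_t^*$ depends on $z_t$ and hence on $e_{t|t-1}$, so $\langle a_t^*-a_t,e_{t|t-1}\rangle$ is not itself Gaussian and the maximum over pairs cannot simply be exchanged with the expectation. I would resolve this either by conditioning on the realized value of $a_t^*$ and summing the contributions, or by directly invoking a worst-case pair argument which is the interpretation consistent with the stated form of the bound. A secondary subtlety is the Riccati induction: establishing $P_{t|t-1}\preceq P_{\overline{a}}$ from $P_{\overline{a}}\succeq P_a$ really requires deducing that $g(P_{\overline{a}},a)\preceq P_{\overline{a}}$ for every admissible $a$, so the assumption must be interpreted as providing this invariance rather than merely the comparison of fixed points.
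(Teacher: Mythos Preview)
Your proposal follows the paper's proof essentially step for step: decompose $z_t=\hat z_{t|t-1}+e_{t|t-1}$, use the greedy rule to get $r_t\le\langle a_t^*-a_t,e_{t|t-1}\rangle$, take a Gaussian/half-normal expectation, bound $P_{t|t-1}\preceq P_{\overline a}$, and sum. The two subtleties you flag are handled no more carefully in the paper—the paper simply asserts that $\langle a_t^*-a_t,e_{t|t-1}\rangle$ is half-normal and then passes to the worst-case pair in step $(c)$, and the Riccati induction is only spelled out later in the proof of Theorem~\ref{theorem:sensitivity_condition}—so your caution is warranted rather than a deviation from the intended argument.
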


\begin{proof}
    The instantaneous regret $r_t \triangleq X_t^* - X_t$ for one round $t$ is upper bounded as follows:
    \begin{align}
        r_t &\triangleq X_t^* - X_t \nonumber \\
        & = \left\langle a_t^* , z_t \right\rangle - \left\langle a_t , z_t \right\rangle \nonumber \\
        & \overset{(a)}{=} \left\langle a_t^* , \hat{z}_{t|t-1} + e_{t|t-1} \right\rangle - \left\langle a_t , z_t \right\rangle \nonumber, 
    \end{align}
    \begin{equation}
        \Rightarrow r_t \overset{(b)}{\leq} \left\langle a_t , \hat{z}_{t|t-1} \right\rangle - \left\langle a_t , z_t \right\rangle + \left\langle a_t^* , e_{t|t-1} \right\rangle. \label{eq:greedy_bound}
    \end{equation}

    For $(a)$, we used the expression $z_t = \hat{z}_{t|t-1} + e_{t|t-1}$. As for $(b)$ in \eqref{eq:greedy_bound}, since $a_t \in \mathcal{A}$ is chosen by KODE at round $t$, then 
    \begin{equation}
        \left\langle a_t^* , \hat{z}_{t|t-1} \right\rangle \leq \left\langle a_t , \hat{z}_{t|t-1} \right\rangle \nonumber. 
    \end{equation}

    Therefore, the bound in \eqref{eq:greedy_bound} can be further simplified to 
    \begin{equation}
        r_t \leq - \left\langle a_t , e_{t|t-1}\right\rangle + \left\langle a_t^* , e_{t|t-1} \right\rangle \nonumber. 
    \end{equation}

    Since $\left\langle a_t^* - a_t , e_{t|t-1}\right\rangle  \geq 0 $ and $e_{t|t-1} \sim\mathcal{N}\left(0,P_{t|t-1}\right)$, then $\left\langle a_t^* - a_t , e_{t|t-1}\right\rangle$ is a random variable sampled from the Half-Normal distribution. Therefore, the sum has the expectation 
    \begin{multline}
        \mathbb{E}\left[- \left\langle a_t , e_{t|t-1}\right\rangle + \left\langle a_t^* , e_{t|t-1} \right\rangle\right] \\ = \sqrt{\frac{2\left(a_t^* - a_t\right)^\top P_{t|t-1}\left(a_t^* - a_t\right)}{\pi}} \nonumber, 
    \end{multline}
    implying that expected instantaneous regret has the following upper bound:
    \begin{equation}
        \mathbb{E}\left[r_t\right] \overset{(c)}{\leq} \max_{a,a' \in \mathcal{A}}   \sqrt{\frac{2\left(a-a'\right)^\top P_{\overline{a}}\left(a-a'\right)}{\pi}}, \nonumber. 
    \end{equation}
    
    In $(c)$ we use the fact that $P_{\overline{a}} \succeq P_{0|-1}$ and $P_{\overline{a}} \succeq P_a$ for any $a \in \mathcal{A}$. Therefore, since regret $R_n = \sum_{t=1}^n \mathbb{E}\left[r_t\right]$, regret for KODE has the bound \eqref{eq:Greedy_Regret_Bound}. 
\end{proof}

In Theorem \ref{theorem:Greedy_Regret_Bound}, we derived a linear regret bound for KODE. However, this provides very little intuition on how ``good'' KODE is. To better understand KODE's performance, we will consider how close KODE's action is with respect to the \textit{Oracle}'s action.

The following theorem provides a bound on the angle difference between the LGDS state variable $z_t$ and the Kalman filter state prediction $\hat{z}_{t|t-1}$. This will be used to demonstrate how closely the Kalman filter state prediction $\hat{z}_{t|t-1}$ using KODE aligns with the LGDS state variable $z_t$. This provides a metric for the distance between KODE's action $a_t \in \mathcal{A}$ and the \textit{Oracle}'s action $a_t^* \in \mathcal{A}$. 

\begin{theorem}\label{theorem:equiv_optim}
    Consider the LGDS's state variable $z_t$ and the Kalman state prediction $\hat{z}_{t|t-1}$:
    \begin{equation}\label{eq:computed_angle}
        \theta_t = \arccos\left(\frac{\left\langle z_t, \hat{z}_{t|t-1} \right\rangle}{\left\Vert z_t \right\Vert_2 \left\Vert \hat{z}_{t|t-1}\right\Vert_2}\right). 
    \end{equation}

    The bound on the expected angle $\theta_t\in [0,\pi/4]$ between the state $z_t$ and the Kalman filter state variable prediction $\hat{z}_{t|t-1}$ is 
    \begin{equation}\label{eq:upper_bound}
        \mathbb{E}\left[\theta_t\mid \mathcal{F}_{t-1} \right] \leq \overline{\theta}_t,
    \end{equation}
    where $\overline{\theta}_t$ is defined as
    \begin{equation}\label{eq:theta_bound_def}
        \overline{\theta}_t \triangleq \frac{1}{2}\arccos\left(\frac{2\left\Vert \hat{z}_{t|t-1} \right\Vert_2^2}{\left\Vert \hat{z}_{t|t-1} \right\Vert_2^2 + \mbox{tr}\left(P_{t|t-1}\right)} - 1\right) .
    \end{equation}
    
\end{theorem}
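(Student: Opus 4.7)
Condition everything on $\mathcal{F}_{t-1}$, so that $\hat{z}_{t|t-1}$ is deterministic and $z_{t}=\hat{z}_{t|t-1}+e_{t|t-1}$ with $e_{t|t-1}\sim\mathcal{N}(0,P_{t|t-1})$. The target bound $\overline{\theta}_{t}$ is defined via an $\arccos$ of a quantity that, using a half-angle identity, can be rewritten as $\cos(2\overline{\theta}_{t})=\frac{2\|\hat{z}_{t|t-1}\|_{2}^{2}}{\|\hat{z}_{t|t-1}\|_{2}^{2}+\mathrm{tr}(P_{t|t-1})}-1$. This suggests working with $\cos(2\theta_{t})$ rather than $\theta_{t}$ directly.

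\textbf{Step 1 (Jensen on the doubled angle).} Because the statement restricts to $\theta_{t}\in[0,\pi/4]$, we have $2\theta_{t}\in[0,\pi/2]$, a range on which $\cos$ is concave. Jensen's inequality then gives $\mathbb{E}[\cos(2\theta_{t})\mid\mathcal{F}_{t-1}]\leq\cos(2\,\mathbb{E}[\theta_{t}\mid\mathcal{F}_{t-1}])$, and since $\arccos$ is monotone decreasing, rearranging yields
\begin{equation}
\mathbb{E}[\theta_{t}\mid\mathcal{F}_{t-1}]\leq \tfrac{1}{2}\arccos\!\bigl(\mathbb{E}[\cos(2\theta_{t})\mid\mathcal{F}_{t-1}]\bigr). \nonumber
\end{equation}
So it suffices to lower-bound $\mathbb{E}[\cos(2\theta_{t})\mid\mathcal{F}_{t-1}]$ by $\cos(2\overline{\theta}_{t})$, i.e., using $\cos(2\theta_{t})=2\cos^{2}(\theta_{t})-1$, to show
\begin{equation}
\mathbb{E}\!\left[\cos^{2}(\theta_{t})\mid\mathcal{F}_{t-1}\right]\;\geq\;\frac{\|\hat{z}_{t|t-1}\|_{2}^{2}}{\|\hat{z}_{t|t-1}\|_{2}^{2}+\mathrm{tr}(P_{t|t-1})}. \nonumber
\end{equation}

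\textbf{Step 2 (Cauchy--Schwarz plus Lemma~1).} Writing $\cos^{2}(\theta_{t})=\frac{\langle z_{t},\hat{z}_{t|t-1}\rangle^{2}}{\|z_{t}\|_{2}^{2}\|\hat{z}_{t|t-1}\|_{2}^{2}}$, apply the $L^{2}$ Cauchy--Schwarz inequality with $X=\langle z_{t},\hat{z}_{t|t-1}\rangle/\|z_{t}\|_{2}$ and $Y=\|z_{t}\|_{2}$:
\begin{equation}
\bigl(\mathbb{E}[\langle z_{t},\hat{z}_{t|t-1}\rangle\mid\mathcal{F}_{t-1}]\bigr)^{2}\;\leq\;\mathbb{E}\!\left[\tfrac{\langle z_{t},\hat{z}_{t|t-1}\rangle^{2}}{\|z_{t}\|_{2}^{2}}\,\bigg|\,\mathcal{F}_{t-1}\right]\,\mathbb{E}\!\left[\|z_{t}\|_{2}^{2}\mid\mathcal{F}_{t-1}\right]. \nonumber
\end{equation}
From Lemma~\ref{lemma:Kalman_Facts}, $\mathbb{E}[\langle z_{t},\hat{z}_{t|t-1}\rangle\mid\mathcal{F}_{t-1}]=\|\hat{z}_{t|t-1}\|_{2}^{2}$ and $\mathbb{E}[\|z_{t}\|_{2}^{2}\mid\mathcal{F}_{t-1}]=\|\hat{z}_{t|t-1}\|_{2}^{2}+\mathrm{tr}(P_{t|t-1})$. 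Dividing the Cauchy--Schwarz inequality through by $\|\hat{z}_{t|t-1}\|_{2}^{2}\cdot\mathbb{E}[\|z_{t}\|_{2}^{2}\mid\mathcal{F}_{t-1}]$ produces exactly the lower bound on $\mathbb{E}[\cos^{2}(\theta_{t})\mid\mathcal{F}_{t-1}]$ needed for Step~1, after which combining everything yields $\mathbb{E}[\theta_{t}\mid\mathcal{F}_{t-1}]\leq\overline{\theta}_{t}$.

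\textbf{Main obstacle.} The only real subtlety is the direction of Jensen's inequality: because the bound is an $\arccos$ (decreasing) composed with a concave function, the Jensen step forces one to prove a \emph{lower} bound on $\mathbb{E}[\cos(2\theta_{t})]$, not an upper bound. This is what motivates introducing the ratio $\langle z_{t},\hat{z}_{t|t-1}\rangle^{2}/\|z_{t}\|_{2}^{2}$ and applying Cauchy--Schwarz in the ``reverse'' direction (bounding an expectation below by a squared mean over a second moment). The range restriction $\theta_{t}\in[0,\pi/4]$ is exactly what makes $\cos$ concave on the doubled range $[0,\pi/2]$, so the Jensen argument is valid; outside this range the inequality reverses and no such clean bound is available.
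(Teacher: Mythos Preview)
Your proposal is correct and follows essentially the same approach as the paper's proof: both use Cauchy--Schwarz to lower-bound $\mathbb{E}[\cos^{2}\theta_{t}\mid\mathcal{F}_{t-1}]$ by $\|\hat{z}_{t|t-1}\|_{2}^{2}/(\|\hat{z}_{t|t-1}\|_{2}^{2}+\mathrm{tr}(P_{t|t-1}))$, then pass to $\cos(2\theta_{t})$ via the half-angle identity and apply Jensen's inequality for concave $\cos$ on $[0,\pi/2]$. The only cosmetic difference is the factorization in Cauchy--Schwarz: the paper splits $\langle z_{t},\hat{z}_{t|t-1}\rangle=\|z_{t}\|_{2}\|\hat{z}_{t|t-1}\|_{2}\cos\theta_{t}$ into the factors $\|z_{t}\|_{2}\|\hat{z}_{t|t-1}\|_{2}$ and $\cos\theta_{t}$, whereas you use $X=\langle z_{t},\hat{z}_{t|t-1}\rangle/\|z_{t}\|_{2}$ and $Y=\|z_{t}\|_{2}$; since $X^{2}=\|\hat{z}_{t|t-1}\|_{2}^{2}\cos^{2}\theta_{t}$ these yield the identical inequality.
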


\begin{proof}

    Let $\theta_t\in [0,2\pi)$ be the angle between $z_t$ and $\hat{z}_{t|t-1}$. The expected angle $\mathbb{E}\left[\theta_t\mid \mathcal{F}_{t-1}\right]$ is bounded as follows. First, using the Cauchy-Schwarz inequality and $\hat{z}_{t|t-1} = \mathbb{E}\left[\hat{z}_{t|t-1} \mid \mathcal{F}_{t-1}\right]$, then the following inequalities are true
    \begin{multline}
        \mathbb{E}\left[\left\langle z_t, \hat{z}_{t|t-1}\right\rangle \mid \mathcal{F}_{t-1} \right] = \\
        \mathbb{E}\left[\left\Vert z_t \right\Vert_2 \left\Vert \hat{z}_{t|t-1} \right\Vert_2 \cos\theta_t\mid \mathcal{F}_{t-1} \right] \nonumber, 
    \end{multline}
    \begin{multline}
        \mathbb{E}\left[\left\langle z_t, \hat{z}_{t|t-1}\right\rangle \mid \mathcal{F}_{t-1} \right] \leq \\ \sqrt{\mathbb{E}\left[\left\Vert z_t \right\Vert_2^2 \mid \mathcal{F}_{t-1} \right]\mathbb{E}\left[\left\Vert \hat{z}_{t|t-1} \right\Vert_2^2 \mid \mathcal{F}_{t-1} \right]} \cdot \\
        \sqrt{\mathbb{E}\left[\cos^2\theta_t\mid \mathcal{F}_{t-1} \right]} \nonumber, 
    \end{multline}
    \begin{multline}
        \Rightarrow \mathbb{E}\left[\left\langle z_t, \hat{z}_{t|t-1}\right\rangle \mid \mathcal{F}_{t-1} \right] \leq \\ \sqrt{\left\Vert \hat{z}_{t|t-1} \right\Vert_2^2\mathbb{E}\left[\left\Vert z_t \right\Vert_2^2 \mid \mathcal{F}_{t-1} \right]\mathbb{E}\left[\cos^2\theta_t\mid \mathcal{F}_{t-1} \right]} \nonumber.
    \end{multline}

    For the next step, the state $z_t$ can be expressed $z_t = \hat{z}_{t|t-1} + e_{t|t-1}$. In addition, according to Lemma \ref{lemma:Kalman_Facts}, $\mathbb{E}\left[\left\langle \hat{z}_{t|t-1}, e_{t|t-1}\right\rangle\right] = 0$. Therefore, 
    \begin{multline}
        \frac{\mathbb{E}\left[\left\langle \hat{z}_{t|t-1} + e_{t|t-1}, \hat{z}_{t|t-1}\right\rangle \mid \mathcal{F}_{t-1} \right]^2}{\left\Vert \hat{z}_{t|t-1} \right\Vert_2^2\mathbb{E}\left[\left\Vert \hat{z}_{t|t-1} + e_{t|t-1} \right\Vert_2^2 \mid \mathcal{F}_{t-1} \right]} \leq \\ \mathbb{E}\left[\cos^2 \theta_t\mid \mathcal{F}_{t-1} \right] \nonumber, 
    \end{multline}
    \begin{multline}
        \frac{\mathbb{E}\left[\left\Vert \hat{z}_{t|t-1} \right\Vert_2^2 \mid \mathcal{F}_{t-1} \right]^2}{\left\Vert \hat{z}_{t|t-1} \right\Vert_2^2\mathbb{E}\left[\left\Vert \hat{z}_{t|t-1} \right\Vert_2^2 + \left\Vert e_{t|t-1} \right\Vert_2^2 \mid \mathcal{F}_{t-1} \right]} \leq \\ \mathbb{E}\left[\cos^2 \theta_t\mid \mathcal{F}_{t-1} \right] \nonumber , 
    \end{multline}
    \begin{multline}
        \frac{\left\Vert \hat{z}_{t|t-1} \right\Vert_2^2}{\left\Vert \hat{z}_{t|t-1} \right\Vert_2^2 + \mathbb{E}\left[\left\Vert e_{t|t-1} \right\Vert_2^2\mid \mathcal{F}_{t-1} \right]} \leq \\ \mathbb{E}\left[\frac{1+\cos 2\theta_t}{2} \mid \mathcal{F}_{t-1} \right] \nonumber, 
    \end{multline}
    \begin{equation}\label{eq:term_1}
        \Rightarrow \frac{\left\Vert \hat{z}_{t|t-1} \right\Vert_2^2}{\left\Vert \hat{z}_{t|t-1} \right\Vert_2^2 + \mbox{tr}\left(P_{t|t-1}\right)} \leq \mathbb{E}\left[\frac{1+\cos 2\theta_t}{2} \mid \mathcal{F}_{t-1} \right], 
    \end{equation}

    The function $\cos2\theta_t$ is concave in the interval $\theta_t \in \left[0,\pi/4\right]$. Using Jensen's inequality \cite{boucheron2013concentration}, the right side of \eqref{eq:term_1} is upper-bounded as 
    \begin{equation}
        \frac{2\left\Vert \hat{z}_{t|t-1} \right\Vert_2^2}{\left\Vert \hat{z}_{t|t-1} \right\Vert_2^2 + \mbox{tr}\left(P_{t|t-1}\right)} -1 \leq \cos 2\mathbb{E}\left[\theta_t\mid \mathcal{F}_{t-1} \right] \nonumber, 
    \end{equation}
    \begin{equation}
        \Rightarrow \frac{1}{2}\arccos\left(\frac{2\left\Vert \hat{z}_{t|t-1} \right\Vert_2^2}{\left\Vert \hat{z}_{t|t-1} \right\Vert_2^2 + \mbox{tr}\left(P_{t|t-1}\right)} - 1\right) \overset{(a)}{\geq} \mathbb{E}\left[\theta_t\mid \mathcal{F}_{t-1} \right] \nonumber ,
    \end{equation}
    where in $(a)$ we used the fact that the inverse of $\cos\left(\cdot\right)$ switches the inequality. Therefore, the bound for $\mathbb{E}\left[\theta_t\mid \mathcal{F}_{t-1} \right]$ is \eqref{eq:upper_bound}. If the bound is greater than $\pi/4$, then it is no longer viable.     
\end{proof}

Theorem \ref{theorem:equiv_optim} provides a bound on the angle between the LGDS's state variable $z_t$ and the Kalman filter state prediction $\hat{z}_{t|t-1}$. By bounding this angle, we can observe if KODE and the \textit{Oracle} will choose the same action $a_t = a_t^*$. This metric is more informative since we are now measuring how close the two actions $a_t,a_t^*$ are to each other. 

A major issue in Theorem \ref{theorem:equiv_optim} are the terms $\left\Vert \hat{z}_{t|t-1}\right\Vert_2$ and $\mbox{tr}\left(P_{t|t-1}\right)$. Since we can only observe these values at time $t$, the bound \eqref{eq:upper_bound} can only be computed online. In addition, this gives no intuition on how the properties of the LGDS \eqref{eq:known_LGDS} impact the performance of KODE. Therefore, in the next theorem, we will provide an upper bound that can be computed without observing $\left\Vert \hat{z}_{t|t-1}\right\Vert_2$ and $\mbox{tr}\left(P_{t|t-1}\right)$. This will provide a perspective on how the properties of the LGDS impact the performance of KODE. 

\begin{theorem}\label{theorem:sensitivity_condition}
    Assume that $\left(\Gamma, Q^{1/2}\right)$ is controllable and $\left(\Gamma, a\right)$ is detectable for any $a \in \mathcal{A}$. Let $P_a$ be the steady state solution of the Kalman filter for each action $a \in \mathcal{A}$, 
    \begin{equation}
        P_a = g\left(P_a,a\right) \nonumber,
    \end{equation}
    where $g\left(P_a,a\right)$ is defined to be \eqref{eq:riccati_equation}. Let there be $P_{\overline{a}}$ where for each $a \in \mathcal{A}$, $P_{\overline{a}} \succeq P_a$. Define $Z_t = \mathbb{E}\left[z_tz_t^\top\right]$ where $Z = \lim_{t\rightarrow \infty} Z_t$ is the solution of the Lyapunov equation $Z = \Gamma Z\Gamma^\top + Q$. Assuming that $P_{\overline{a}}\succeq P_{0|-1}$ and $Z \succeq Z_0$, we have the inequality
    \begin{equation}\label{eq:sensitivity_condition}
        \frac{1}{2}\arccos\left(\frac{2\left\Vert \hat{z}_{t|t-1} \right\Vert_2^2}{\left\Vert \hat{z}_{t|t-1} \right\Vert_2^2 + \mbox{tr}\left(P_{t|t-1}\right)} - 1\right) \leq \overline{\theta}_S, 
    \end{equation}
    \begin{equation}\label{eq:theta_bound_steady_def}
        \overline{\theta}_S \triangleq \frac{1}{2}\arccos\left(\frac{2\nu}{\nu + \mbox{tr}\left(P_{\overline{a}}\right)} - 1\right). 
    \end{equation}
    
    The variable $\nu$ in \eqref{eq:theta_bound_steady_def} is a threshold value such that with a probability $\alpha$ the following inequality is true: 
    \begin{equation}
        w_t^\top \left(Z-P_{\overline{a}}\right) w_t \geq \nu, ~ w_t \sim \mathcal{N}(0,I_d) \nonumber.
    \end{equation}
\end{theorem}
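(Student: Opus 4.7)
The plan is to upper bound the left-hand side of \eqref{eq:sensitivity_condition}---which is exactly the time-dependent quantity $\overline{\theta}_t$ from Theorem \ref{theorem:equiv_optim}---by the time-independent constant $\overline{\theta}_S$. A direct computation shows that the map $(x,y) \mapsto \frac{1}{2}\arccos\left(\frac{x-y}{x+y}\right)$ on $(0,\infty)^2$ is decreasing in $x$ and increasing in $y$, so it suffices to derive (i) a deterministic upper bound $\mbox{tr}(P_{t|t-1}) \leq \mbox{tr}(P_{\overline{a}})$ and (ii) a probabilistic lower bound $\|\hat{z}_{t|t-1}\|_2^2 \geq \nu$ holding with probability at least $\alpha$, and then invoke monotonicity to replace the time-varying quantities inside $\overline{\theta}_t$ by their worst-case constants.

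For step (i), I would argue $P_{t|t-1} \preceq P_{\overline{a}}$ by induction on $t$, with the base case $P_{0|-1} \preceq P_{\overline{a}}$ given by hypothesis. The Woodbury identity recasts the Riccati update $g$ defined in \eqref{eq:riccati_equation} as $g(P,a) = \Gamma(P^{-1} + aa^\top/\sigma^2)^{-1}\Gamma^\top + Q$, which is manifestly monotone increasing in its first argument. Combined with the defining property that $P_{\overline{a}}$ Loewner-dominates every steady-state $P_a$, a standard Riccati comparison argument yields $g(P_{t|t-1},a_t) \preceq g(P_{\overline{a}},a_t) \preceq P_{\overline{a}}$, completing the induction; taking trace gives (i).

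For step (ii), I would use the orthogonal decomposition $z_t = \hat{z}_{t|t-1} + e_{t|t-1}$ together with the orthogonality relation in Lemma \ref{lemma:Kalman_Facts} to compute the unconditional covariance $\mbox{Cov}(\hat{z}_{t|t-1}) = Z_t - \mathbb{E}[P_{t|t-1}]$. The Lyapunov recursion $Z - Z_{t+1} = \Gamma(Z - Z_t)\Gamma^\top \succeq 0$ shows that the hypothesis $Z \succeq Z_0$ propagates to $Z \succeq Z_t$ for all $t$, and combining with $P_{t|t-1} \preceq P_{\overline{a}}$ from (i) gives $\mbox{Cov}(\hat{z}_{t|t-1}) \succeq Z - P_{\overline{a}}$ in the appropriate asymptotic regime. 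Representing $\hat{z}_{t|t-1} \stackrel{d}{=} \mbox{Cov}(\hat{z}_{t|t-1})^{1/2} w_t$ with $w_t \sim \mathcal{N}(0, I_d)$ produces the stochastic dominance $\|\hat{z}_{t|t-1}\|_2^2 \geq w_t^\top (Z - P_{\overline{a}}) w_t$, which is at least $\nu$ with probability $\alpha$ by the defining property of $\nu$ in the theorem statement.

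Substituting the bounds from (i) and (ii) into the expression for $\overline{\theta}_t$ and invoking the monotonicity observation from the first paragraph delivers \eqref{eq:sensitivity_condition}. The most delicate step is (ii): the target dominance $\mbox{Cov}(\hat{z}_{t|t-1}) \succeq Z - P_{\overline{a}}$ requires reconciling $Z_t \preceq Z$ with $\mathbb{E}[P_{t|t-1}] \preceq P_{\overline{a}}$, which push in opposite directions within the difference $Z_t - \mathbb{E}[P_{t|t-1}]$. The cleanest resolution is to work in the steady-state regime where the transient gap $Z - Z_t$ vanishes and is dominated by the positive-semidefinite surplus $P_{\overline{a}} - \mathbb{E}[P_{t|t-1}]$; handling finite $t$ may require strengthening the initial-condition hypotheses or a finer transient analysis of the coupled Lyapunov/Riccati dynamics.
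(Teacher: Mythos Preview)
Your proposal is correct and follows essentially the same route as the paper: induction via Riccati monotonicity to get $P_{t|t-1}\preceq P_{\overline{a}}$, the orthogonality decomposition $\hat{Z}_{t|t-1}=Z_t-P_{t|t-1}$ to lower-bound $\mbox{Cov}(\hat{z}_{t|t-1})$ by $Z-P_{\overline{a}}$, a Gaussian coupling through a common $w_t\sim\mathcal{N}(0,I_d)$, and monotonicity of the $\arccos$ expression to conclude. Your flagged ``delicate step'' in (ii) is handled by the paper in exactly the way you anticipate---by passing to the steady-state $Z_t\to Z$ so that only the surplus $P_{\overline{a}}-P_{t|t-1}\succeq 0$ remains---so your concern is well-placed but does not separate your argument from the paper's; incidentally, your monotonicity observation is the correct justification for the final substitution, whereas the paper's appeal to convexity/concavity is imprecise.
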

\begin{proof}

    The function $g\left(P_a,a\right)$ is monotonic increasing according to \cite{1333199}. Using proof by induction, for the base case, we first have the following inequality which is satisfied by monotonicity of $g\left(P_a,a\right)$ with respect to $P_a$:
    \begin{equation}
        P_{\overline{a}} \succeq P_{0|-1} \Rightarrow g\left(P_{\overline{a}},a\right) \succeq g\left(P_{0|-1},a\right)\nonumber,
    \end{equation}
    \begin{equation}
        \Rightarrow P_{\overline{a}} \succeq g\left(P_{0|-1},a\right) \mbox{ since } P_{\overline{a}} \succeq g\left(P_{\overline{a}},a\right)
        \nonumber. 
    \end{equation}

    For the induction step for proof by induction, we have
    \begin{equation}
        P_{\overline{a}} \succeq P_{t|t-1} \Rightarrow g\left(P_{\overline{a}},a\right) \succeq g\left(P_{t|t-1},a\right)\nonumber, 
    \end{equation}
    \begin{equation}
        \Rightarrow P_{\overline{a}} \succeq g\left(P_{t|t-1},a\right) \mbox{ since } P_{\overline{a}} \succeq g\left(P_{\overline{a}},a\right) \label{eq:matrix_inequality2}. 
    \end{equation}

    The above implies that if $P_{\overline{a}} \succeq P_{0|-1}$, then $P_{\overline{a}} \succeq P_{t|t-1}$ always. Therefore, we have the following inequality
    \begin{equation}
        \mbox{tr}\left(P_{\overline{a}}\right) \geq \mbox{tr}\left(P_{t|t-1}\right) \nonumber.
    \end{equation}

    For the next part, by orthogonality principle, $\mathbb{E}\left[\hat{z}_{t|t-1} e_{t|t-1}^\top \mid \mathcal{F}_{t-1}\right] = \mathbb{E}\left[e_{t|t-1} \hat{z}_{t|t-1}^\top \mid \mathcal{F}_{t-1}\right] = \mathbf{0}$. Therefore, since $z_t = \hat{z}_{t|t-1} + e_{t|t-1}$, we can express $Z_t = \mathbb{E}\left[z_t z_t^\top \right]$ as follows: 
    \begin{align}
        Z_t & = \mathbb{E}\left[z_t z_t^\top \right] \nonumber\\
        & = \mathbb{E}\left[(\hat{z}_{t|t-1} + e_{t|t-1}) (\hat{z}_{t|t-1} + e_{t|t-1})^\top\right] \nonumber\\
        & = \mathbb{E}\left[\hat{z}_{t|t-1}\hat{z}_{t|t-1}^\top \right] + \mathbb{E}\left[\hat{z}_{t|t-1} e_{t|t-1}^\top \right]  + \mathbb{E}\left[e_{t|t-1}\hat{z}_{t|t-1}^\top \right] \nonumber \\
        & ~~+ \mathbb{E}\left[e_{t|t-1} e_{t|t-1}^\top \right] \nonumber\\
        & \overset{(a)}{=} \hat{Z}_{t|t-1} + P_{t|t-1} \nonumber,
    \end{align}
    \begin{equation}
        \Rightarrow \hat{Z}_{t|t-1} = Z_t - P_{t|t-1} \label{eq:Kalman_Filter_orthogonality_principle}, 
    \end{equation}
    where in $(a)$ we used Lemma \ref{lemma:Kalman_Facts} and 
    \begin{multline}
        \mathbb{E}\left[\hat{z}_{t|t-1} e_{t|t-1}^\top \right] = \mathbb{E}\left[\mathbb{E}\left[\hat{z}_{t|t-1} e_{t|t-1}^\top \mid \mathcal{F}_{t-1} \right]\right] = \mathbf{0}, 
    \end{multline}
    
    Let $Z$ be the solution of the Lyapunov equation $Z = \Gamma Z \Gamma^\top +Q$. Since $P_{\overline{a}} \succeq P_{t|t-1}$ is satisfied and $Z_t \rightarrow Z$, then 
    \begin{equation}
        \hat{Z}_{t|t-1} = Z - P_{t|t-1} \nonumber, 
    \end{equation}
    \begin{equation}
        \Rightarrow \hat{Z}_{t|t-1} \succeq Z- P_{\overline{a}} \label{eq:relationship_z_hat}. 
    \end{equation}
    
    Therefore, the variance of $\mathbb{E}\left[\left\Vert\hat{z}_{t|t-1}\right\Vert_2^2 \right] = \mbox{tr}\left(\hat{Z}_{t|t-1}\right) \geq \mbox{tr}\left(Z-P_{\overline{a}}\right)$. Let there be two random variables 
    \begin{align}
        v_t^\top v_t & =  w_t^\top \left(Z-P_{\overline{a}}\right) w_t  \nonumber\\
        \hat{z}_{t|t-1}^\top \hat{z}_{t|t-1} & = w_t^\top \hat{Z}_{t|t-1} w_t \nonumber\\
        w_t & \sim \mathcal{N}(0,I_d) \nonumber.
    \end{align}

    Equation \eqref{eq:relationship_z_hat} demonstrates that $v_t^\top v_t \leq \hat{z}_{t|t-1}^\top \hat{z}_{t|t-1}$. With a probability of $\alpha$, $\alpha \in [0,1]$ where $v_t^\top v_t \geq \nu$, 
    \begin{equation}
        \hat{z}_{t|t-1}^\top \hat{z}_{t|t-1} \geq v_t^\top v_t \geq \nu \nonumber.
    \end{equation}

    The bound \eqref{eq:upper_bound} is convex with respect to $\left\Vert \hat{z}_{t|t-1} \right\Vert_2$ and concave with respect to $\mbox{tr}\left(P_{t|t-1}\right)$. Using the lower and upper bounds of $\left\Vert \hat{z}_{t|t-1} \right\Vert_2$ and $\mbox{tr}\left(P_{t|t-1}\right)$ respectively, \eqref{eq:sensitivity_condition} is satisfied with a probability $\alpha$.
\end{proof}

Theorem \ref{theorem:sensitivity_condition} states that as the trace $\mbox{tr}\left(P_{\overline{a}}\right)$ of the Kalman filter error covariance matrix decreases, then the angle between the LGDS's state variable $z_t$ and Kalman filter state prediction $\hat{z}_{t|t-1}$ decreases. Therefore, the trace of the Kalman filter error covariance matrix $\mbox{tr}\left(P_{\overline{a}}\right)$ impacts the probability that KODE chooses the same action as the \textit{Oracle}, i.e. $a_t^* = a_t$. In the next subsection, we will connect the observability of the LGDS \eqref{eq:known_LGDS} to KODE. This will provide some intuition on why KODE may perform well for certain LGDS's.

\subsection{LGDS Observability Impact on KODE's Exploration}\label{subsec:greedy_performance}

Based on the results of Theorem \ref{theorem:sensitivity_condition}, as the error covariance matrix $\mbox{tr}\left(P_{\overline{a}}\right)$ decreases, the Kalman filter state prediction $\hat{z}_{t|t-1}$ becomes more aligned with LGDS's state variable $z_t$. In this subsection, we will explain that, despite KODE being an exploration-free method, the LGDS will perturb KODE to explore each observable action $a \in \mathcal{A}$. First, the Kalman filter \eqref{eq:Kalman_Filter} state prediction $\hat{z}_{t+1|t}$ can be expressed the following way
\begin{equation}
    \hat{z}_{t+1|t} = \Gamma \hat{z}_{t|t-1} + \Gamma P_{t|t-1} \frac{a_t}{\sigma^2}\sqrt{a_t^\top P_{t|t-1}a_t + \sigma^2}\omega_t \nonumber,
\end{equation}
where $\omega_t \in \mathbb{R}$ is white noise, i.e. $\omega_t \sim \mathcal{N}\left(0,1\right)$. Therefore, the reward prediction for each action $a \in \mathcal{A}$ is expressed as 
\begin{multline}
    \left\langle a, \hat{z}_{t+1|t} \right\rangle = \left\langle a, \Gamma \hat{z}_{t|t-1} \right\rangle \\+ \left\langle a, \Gamma P_{t|t-1} \frac{a_t}{\sigma^2}\sqrt{a_t^\top P_{t|t-1}a_t + \sigma^2}\omega_t\right\rangle \nonumber,
\end{multline}
\begin{multline}\label{eq:Greedy_Method_Expression}
    \Rightarrow \left\langle a, \hat{z}_{t+1|t} \right\rangle = \left\langle a, \hat{z}_{t+1|t-1} \right\rangle \\ + \left\langle a, \Gamma P_{t|t-1} \frac{a_t}{\sigma^2}\sqrt{a_t^\top P_{t|t-1}a_t + \sigma^2}\omega_t\right\rangle .
\end{multline}

In \eqref{eq:Greedy_Method_Expression}, there appears to be a zero-mean Gaussian random variable that is perturbing $\left\langle a, \hat{z}_{t+1|t-1}\right\rangle$ which will define as $u_t\left(a\mid a_t\right)$:
\begin{equation}\label{eq:implicit_exploration}
    u_t\left(a\mid a_t\right) \triangleq \left\langle a, \Gamma P_{t|t-1} \frac{a_t}{\sigma^2}\sqrt{a_t^\top P_{t|t-1}a_t + \sigma^2}\omega_t\right\rangle. 
\end{equation}

If the random variable $u_t\left(a\mid a_t\right)$ defined as \eqref{eq:implicit_exploration} is perturbing the learner to select actions $a \neq a_t$, then this implies that there is exploration occurring when using KODE. The following theorem analyzes the behavior of the random variable $u_t\left(a\mid a_t\right)$ in \eqref{eq:implicit_exploration}, which we will denote as an implicit exploration term.


\begin{theorem}\label{theorem:divergence_undetectable}
    Let there be the LGDS \eqref{eq:known_LGDS}. Let the pair $\left(\Gamma,a\right)$, $a \in \mathcal{A}$, be unobservable and action $\tilde{a} \in \mathcal{A}$ observe the states unobserved by action $a \in \mathcal{A}$. Finally, let the LGDS \eqref{eq:known_LGDS} use a similarity transformation matrix $T \in \mathbb{R}^{d \times d}$ such that the state $z_t$ and actions $a,\tilde{a} \in \mathcal{A}$ are decomposed into 
    \begin{align}
        T\begin{pmatrix}
            z_t^O \\ 
            z_t^U 
        \end{pmatrix} = z_t, ~ T^\top a & = \begin{pmatrix}
            a_O \\
            \mathbf{0}
        \end{pmatrix}, ~ T^\top \tilde{a} = \begin{pmatrix}
            \tilde{a}_O \\
            \tilde{a}_U
        \end{pmatrix} \nonumber, 
    \end{align}
    
    Therefore, the LGDS \eqref{eq:known_LGDS} can be expressed as: 
    \begin{equation}\label{eq:converted_LGDS}
        \begin{cases}
            \begin{pmatrix}
                z_{t+1}^O \\
                z_{t+1}^U
            \end{pmatrix} & = \begin{pmatrix}
                \Gamma_O & \mathbf{0} \\
                \Gamma_{U'} & \Gamma_U
            \end{pmatrix}\begin{pmatrix}
                z_t^O \\
                z_t^U
            \end{pmatrix} + \xi_t'\\
            X_t & = \left\langle \begin{pmatrix}
                a_O \\
                \mathbf{0}
            \end{pmatrix}, \begin{pmatrix}
                z_t^O \\
                z_t^U
            \end{pmatrix}\right\rangle + \eta_t
        \end{cases},
    \end{equation}
    \begin{equation}
        \begin{pmatrix}
                z_0^O \\
                z_0^U
            \end{pmatrix} \sim \mathcal{N}\left(\begin{pmatrix}
                \mathbf{0} \\
                \mathbf{0}
            \end{pmatrix}, \begin{pmatrix}
                P_{0|-1}^O & \mathbf{0}\\
                \mathbf{0} & P_{0|-1}^U
            \end{pmatrix}\right) \nonumber,
    \end{equation}
    where $\left(\Gamma_O,a_O^\top \right)$ is observable. Let the error covariance matrix $P_{t|t-1}$ of the Kalman filter \eqref{eq:Kalman_Filter} be converted as follows
    \begin{equation}\label{eq:converted_kalman_matrix}
        P_{t|t-1} \rightarrow T^{-1} P_{t|t-1} T^{-\top} = \begin{pmatrix}
            P_{t|t-1}^O & \Phi_{t|t-1} \\
            \Phi_{t|t-1}^\top & P_{t|t-1}^U
        \end{pmatrix},
    \end{equation}
    
    An action $\Tilde{a} \in \mathcal{A}$ has an implicit exploration term $u_t\left(\tilde{a}\mid a\right)$ \eqref{eq:implicit_exploration} that is almost surely zero---$P\left(u_t\left(\tilde{a}\mid a\right) = 0\right) = 1$---if all of the following conditions apply:
    \begin{itemize}
        \item The state matrix and action pair $\left(\Gamma,\Tilde{a}\right)$ is unobservable, i.e. $\Gamma_U' = \mathbf{0}$ and $\Tilde{a}_O = \mathbf{0}$. 
        \item The correlation of the state prediction error between the unobserved states $z_t^U$ and observed states $z_t^O$ is zero, i.e. $\Phi_{t|t-1} = \mathbf{0}$. 
    \end{itemize}
\end{theorem}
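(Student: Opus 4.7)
The plan is to show that under the two conditions the scalar coefficient multiplying the Gaussian noise $\omega_t$ in \eqref{eq:implicit_exploration} vanishes identically; since $u_t(\tilde{a}\mid a)$ is that scalar times $\omega_t$, this immediately gives $P(u_t(\tilde{a}\mid a)=0)=1$. All the work therefore reduces to computing the inner product $\tilde{a}^\top \Gamma P_{t|t-1} a$ in the transformed coordinates where the observable/unobservable decomposition is explicit.

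First I would rewrite $\Gamma$ and $P_{t|t-1}$ using the similarity transformation $T$. Writing $\Gamma = T \bar{\Gamma} T^{-1}$ with $\bar{\Gamma} = \begin{pmatrix}\Gamma_O & \mathbf{0}\\ \Gamma_U' & \Gamma_U\end{pmatrix}$, and $P_{t|t-1} = T \bar{P}_{t|t-1} T^\top$ with $\bar{P}_{t|t-1}$ given by \eqref{eq:converted_kalman_matrix}, the scalar coefficient factors as
\begin{equation}
\tilde{a}^\top \Gamma P_{t|t-1} a \;=\; (T^\top \tilde{a})^\top \, \bar{\Gamma}\, \bar{P}_{t|t-1}\, (T^\top a), \nonumber
\end{equation}
so that the problem becomes purely block-matrix algebra in the decomposed coordinates given in \eqref{eq:converted_LGDS}. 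The $T^{-1}T = I$ cancellation in the middle is the key simplification and the only non-routine step in getting to this form.

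Next I would substitute the two hypotheses. Since $(\Gamma,\tilde a)$ is unobservable and the decomposition aligns $\tilde{a}$ with the unobserved subspace (together with the claim that $(\Gamma_O,a_O^\top)$ is observable as assured by the Kalman canonical decomposition), we have $\Gamma_U' = \mathbf{0}$ and $\tilde{a}_O = \mathbf{0}$. With $T^\top a = (a_O^\top, \mathbf{0}^\top)^\top$ and $T^\top \tilde{a} = (\mathbf{0}^\top, \tilde{a}_U^\top)^\top$, the product $\bar{\Gamma}\, \bar{P}_{t|t-1}\, (T^\top a)$ collapses to $\begin{pmatrix}\Gamma_O P_{t|t-1}^O a_O\\ \Gamma_U \Phi_{t|t-1}^\top a_O\end{pmatrix}$, and contracting with $(T^\top\tilde a)^\top$ leaves only the term $\tilde{a}_U^\top \Gamma_U \Phi_{t|t-1}^\top a_O$. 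The second hypothesis $\Phi_{t|t-1} = \mathbf{0}$ then kills this residual term.

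Substituting back into \eqref{eq:implicit_exploration} yields $u_t(\tilde{a}\mid a) = \frac{\sqrt{a^\top P_{t|t-1}a+\sigma^2}}{\sigma^2}\,\omega_t \cdot 0 = 0$ identically, so in particular almost surely, which completes the proof. I do not expect a serious obstacle here beyond book-keeping: the only subtlety is being careful with how $T$ acts on the action vectors versus on $\Gamma$ and $P_{t|t-1}$ (one side is $T^\top$, the other $T^{-1}$ and $T$), and recognising that the two structural zeros $\Gamma_U' = \mathbf{0}$ and $\Phi_{t|t-1} = \mathbf{0}$ are precisely the two independent obstructions a signal from $a$ could exploit to reach the unobserved component along which $\tilde{a}$ lies.
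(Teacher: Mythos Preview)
Your proposal is correct and follows essentially the same approach as the paper: both arguments pass to the transformed coordinates via $T$, compute the block product $\bar{\Gamma}\,\bar{P}_{t|t-1}\,(T^\top a)$, and then read off that the surviving term is $\tilde{a}_U^\top \Gamma_U \Phi_{t|t-1}^\top a_O$, which vanishes once $\Gamma_{U'}=\mathbf{0}$, $\tilde{a}_O=\mathbf{0}$, and $\Phi_{t|t-1}=\mathbf{0}$ are imposed. The paper additionally spells out the full three-term expansion of $u_t(\tilde a\mid a)$ before specialising and comments on the observable case, but for the statement as written your streamlined computation of the scalar $\tilde{a}^\top \Gamma P_{t|t-1} a$ is exactly what is required.
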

\begin{proof}
    Let action $a \in \mathcal{A}$ be an unobservable action, i.e. the pair $\left(\Gamma,a^\top \right)$ is unobservable. We can decompose the LGDS \eqref{eq:known_LGDS} using a similarity transformation matrix $T \in \mathbb{R}^{d \times d}$ into \eqref{eq:converted_LGDS}. Let the error covariance matrix $P_{t|t-1}$ of the Kalman filter be converted to \eqref{eq:converted_kalman_matrix}. Assume that action $a \in \mathcal{A}$ has been chosen. The term $u_t\left(a \mid a\right)$ in \eqref{eq:implicit_exploration} for action $a \in \mathcal{A}$ can be expressed as 
    \begin{multline}
        \Gamma P_{t|t-1}= \\ \begin{pmatrix}
            \Gamma_O P_{t|t-1}^O & \Gamma_O \Phi_{t|t-1} \\
            \Gamma_{U'} P_{t|t-1}^O + \Gamma_U \Phi_{t|t-1}^\top & \Gamma_{U'}\Phi_{t|t-1} + \Gamma_U P_{t|t-1}^U
        \end{pmatrix} \nonumber, 
    \end{multline}
    \begin{equation}
        \Rightarrow u_t\left(a \mid a\right) = \frac{a_O^\top \Gamma_O P_{t|t-1}^O a_O}{\sqrt{a_O^\top P_{t|t-1}^O a_O + \sigma^2}}\omega_t. \nonumber
    \end{equation}
    
    For action $\Tilde{a} \in \mathcal{A}$, the term $u_t\left(\tilde{a}\mid a\right)$ can be expressed as 
    \begin{multline}\label{eq:unobserved_action}
        \Rightarrow u_t\left(\tilde{a}\mid a\right) = \frac{\Tilde{a}_O^\top \Gamma_O P_{t|t-1}^O a_O}{\sqrt{a_O^\top P_{t|t-1}^O a_O + \sigma^2}}\omega_t \\ + \frac{\Tilde{a}_U^\top \Gamma_{U'}P_{t|t-1}^O a_O}{\sqrt{a_O^\top P_{t|t-1}^O a_O + \sigma^2}}\omega_t + \frac{\Tilde{a}_U^\top \Gamma_U\Phi_{t|t-1} a_O}{\sqrt{a_O^\top P_{t|t-1}^O a_O + \sigma^2}}\omega_t. 
    \end{multline}
    
    The following can be inferred from expression \eqref{eq:unobserved_action}: 
    \begin{itemize}
        \item If $\left(\Gamma, \Tilde{a}^\top \right)$ is unobservable and observes the states $z_t^U$ unobserved by $a$, then both $\Gamma_{U'} = \mathbf{0}$ and $\Tilde{a}_O = \mathbf{0}$. Therefore expression \eqref{eq:unobserved_action} for $u_t\left(\tilde{a}\mid a\right)$ can be simplified to 
        \begin{equation}\label{eq:correlated_error}
            u_t\left(\tilde{a}\mid a\right) = \frac{\Tilde{a}_U^\top \Gamma_U\Phi_{t|t-1} a_O}{\sqrt{a_O^\top P_{t|t-1}^O a_O + \sigma^2}}\omega_t . 
        \end{equation}
        
        Expression \eqref{eq:correlated_error} implies that if $\Tilde{a}_U^\top \Gamma_U\Phi_{t|t-1} a_O \neq 0$, then $u_t\left(\tilde{a}\mid a\right)$ defined as \eqref{eq:implicit_exploration} is almost surely nonzero for action $\Tilde{a} \in \mathcal{A}$. 
        \item If $\left(\Gamma, \Tilde{a}^\top \right)$ is observable, then at most one of the values $\Gamma_{U'}$ or $\tilde{a}_O$ can be set to $\mathbf{0}$. Therefore, if the correlation between state prediction errors $\Phi_{t|t-1}$ for state $z_t^U$ and $z_t^O$ is $\mathbf{0}$, expression \eqref{eq:unobserved_action} for $u_t\left(\tilde{a}\mid a\right)$ can be simplified to any of the following
        \begin{multline}
           u_t\left(\tilde{a}\mid a\right) = \\ \begin{cases}
                \frac{\Tilde{a}_O^\top \Gamma_O P_{t|t-1}^O a_O}{\sqrt{a_O^\top P_{t|t-1}^O a_O + \sigma^2}}\omega_t  & \mbox{ if } \Gamma_{U'} = \mathbf{0} \mbox{ and } \Tilde{a}_O \neq \mathbf{0}\\
                \frac{\Tilde{a}_U^\top \Gamma_{U'}P_{t|t-1}^O a_O}{\sqrt{a_O^\top P_{t|t-1}^O a_O + \sigma^2}} \omega_t  & \mbox{ if } \Gamma_{U'} \neq \mathbf{0} \mbox{ and } \Tilde{a}_O = \mathbf{0} \\
            \end{cases} \nonumber.  
        \end{multline}

        Therefore, since either $\Tilde{a}_O^\top \Gamma_O P_{t|t-1}^O a_O \neq 0$ or $\Tilde{a}_U^\top \Gamma_{U'}P_{t|t-1}^O a_O \neq 0$, then the term $u_t\left(\tilde{a}\mid a\right)$ is almost surely nonzero. 
    \end{itemize}

    Based on the points above, the term $u_t\left(\tilde{a}\mid a\right)$ defined as  \eqref{eq:implicit_exploration} is almost surely zero for action $\tilde{a} \in \mathcal{A}$---given that action $a \in \mathcal{A}$ has been previously chosen---if either  $\left(\Gamma, \tilde{a}^\top \right)$ is unobservable and $\Phi_{t|t-1} = \mathbf{0}$ (no correlation between the errors of the state predictions $\hat{z}_{t|t-1}^O$ and $\hat{z}_{t|t-1}^U$). 
\end{proof}

Theorem \ref{theorem:divergence_undetectable} proves that $u_t\left(\tilde{a}\mid a\right)$ defined as \eqref{eq:implicit_exploration} consistently perturbs KODE to explore observable actions $\tilde{a} \in \mathcal{A}$ unless $\tilde{a}^\top \Gamma P_{t|t-1} a = 0$ for some $P_{t|t-1}$. Consequently, action $\tilde{a} \in \mathcal{A}$ is perturbed by $u_t\left(\tilde{a}\mid a\right)$ if either condition is true:
\begin{enumerate}
    \item \textbf{Condition 1}: There exists a subspace $z_t^O$ (see \eqref{eq:converted_LGDS}) of the LGDS state variable $z_t$ that is observed by $\tilde{a}$ and $a$. 
    \item \textbf{Condition 2}: The reward prediction errors of actions $\tilde{a}$ and $a$ are correlated,    
    \begin{equation}
        \mathbb{E}\left[\left(X_t - \left\langle a_t, \hat{z}_{t|t-1} \right\rangle \right)\left(X_t - \left\langle \tilde{a}_t, \hat{z}_{t|t-1} \right\rangle \right)^\top \right] \neq 0 \nonumber, 
    \end{equation}
    where $X_t$ and $\tilde{X}_t$ are expressed as 
    \begin{align}
        X_t & = \left\langle a, z_t \right\rangle + \eta_t \nonumber\\
        \tilde{X}_t & = \left\langle \tilde{a}, z_t \right\rangle + \eta_t \nonumber. 
    \end{align}

\end{enumerate}



\section{Numerical Results}\label{sec:numerical_results}


We evaluate the performance of KODE and compare it to several benchmark methods: UCB \cite{agrawal1995sample}, Sliding Window UCB (SW-UCB) \cite{garivier2008upper}, Rexp3 \cite{besbes2014stochastic}, OFUL \cite{NIPS2011_e1d5be1c}, and a method that randomly selects actions denoted as Random. We include UCB because the LGDS is stable which implies that the output rewards are stationary. SW-UCB is added to since it can adjust to the time correlations of the reward. As for Rexp3, the algorithm has addressed environments similarly to the one proposed in \eqref{eq:known_LGDS}. OFUL is a state-of-the-art linear bandit method, making it a relevant benchmark to our proposed environment.

We generated $10^3$ distinct LGDS instances where the dimension of $z_t$ is $d = 10$ and the number of actions $k = 10$. For each LGDS, the matrices $Q^{1/2}$, $\sigma$, $a \in \mathcal{A}$ are independently sampled from a Gaussian distribution $\mathcal{N}\left(0,1\right)$, with each component of the vectors or matrices independently sampled. To constrain the eigenvalues of the state matrix $\Gamma \in \mathbb{R}^{d \times d}$ in the unit circle, we generated a matrix $G$ from a Gaussian distribution $\mathcal{N}\left(0,1\right)$ and set $\Gamma = \left(0.99/\lambda_{\max}\left(G\right)\right)G$.


In this simulation study, $10$ separate simulations were computed for each randomly generated LGDS. The horizon length $n$ was set to $n = 10^3$. Before each algorithm's interaction with the LGDS, the LGDS was initialized by computing $10^4$ iterations to reach a steady-state distribution. 

Figure \ref{figure:regret_comparison} shows a boxplot illustrating the decrease in regret using KODE compared to the other algorithms. The center of the box is the median with bottom and top boxes edges as the first and third quantiles, respectively. The positive median and quantile percentages suggest that KODE consistently outperforms the comparison methods as depicted in Figure \ref{figure:regret_comparison}. This demonstrates that an exploration-free method is able to outperform a number of well-known SMAB algorithms. In addition, the results suggest that KODE is capable of finding actions that output higher rewards in comparison to the other methods, strengthening our findings that the term $u_t\left(a \mid a_t\right)$ defined as \eqref{eq:implicit_exploration} is perturbing unexplored actions $a \in \mathcal{A}$.

\begin{figure}[t]
    \centering
    \includegraphics[width=\linewidth]{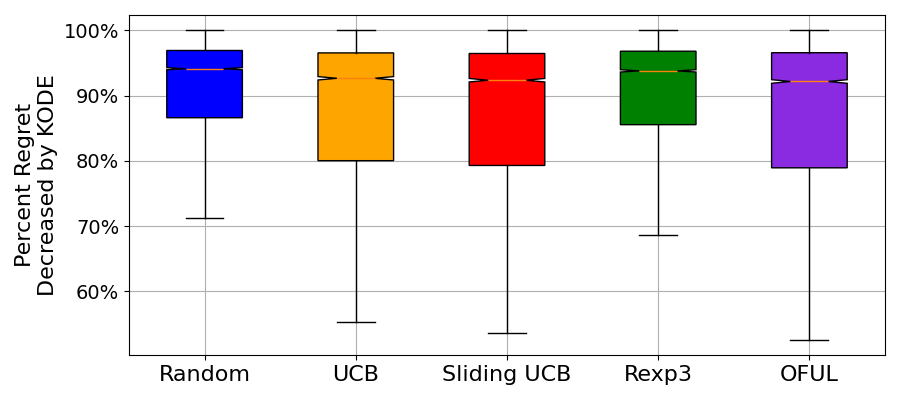}
    \caption{Comparison of KODE with various SMAB algorithms. The percentage indicates the reduction in regret achieved by KODE relative to the compared algorithm.}
    \label{figure:regret_comparison}
\end{figure}

\subsection{Studying Implicit Exploration}

In this analysis, we aim to connect the theoretical developments of $u_t\left(\tilde{a}\mid a\right)$, defined in \eqref{eq:implicit_exploration} and analyzed in Theorem \ref{theorem:divergence_undetectable}, to KODE's empirical relative performance in Figure \ref{figure:regret_comparison}. According to Theorem \ref{theorem:divergence_undetectable}, if two actions $a,a_t \in \mathcal{A}$, where $a \neq a_t$, fail both \textbf{Condition 1} and \textbf{Condition 2} in Subsection \ref{subsec:greedy_performance}, the $u_t\left(a \mid a_t\right)$ is almost surely zero. To understand the impact of the term $u_t\left(a \mid a_t\right)$ on KODE's performance in terms of regret, we will analyze the correlation between KODE's percent regret decreased in Figure \ref{figure:regret_comparison} and the maximum obtainable variance of the term $u_t\left(a \mid a_t\right)$ \eqref{eq:implicit_exploration}, denoted as $\tilde{u}$. The term $\tilde{u}$ is defined as
\begin{equation}\label{eq:tilde_u_def}
    \tilde{u} \triangleq \max_{a,\tilde{a} \in \mathcal{A}, a \neq \tilde{a}} \mathbb{E}\left[\left\langle a, \Gamma P_{\overline{a}} \frac{\tilde{a}}{\sigma^2}\sqrt{a_t^\top P_{\overline{a}}\tilde{a} + \sigma^2}\omega_t\right\rangle^2 \right], 
\end{equation}
\begin{equation}
    \Rightarrow \tilde{u} =  \max_{a,\tilde{a} \in \mathcal{A}, a \neq \tilde{a}}  \left(\frac{a^\top \Gamma P_{\overline{a}} \tilde{a}  \tilde{a}^\top P_{\overline{a}} \Gamma^\top a}{\tilde{a}^\top P_{\overline{a}}\tilde{a} + \sigma^2}\right) \nonumber.  
\end{equation}

Recall that $P_{\overline{a}}$ is defined such that $P_{\overline{a}} \succeq P_a$, where $P_a$ solves the algebraic Riccati equation $P_a = g\left(P_a,a\right)$ for each $a \in \mathcal{A}$. The variable $\tilde{u}$ is a metric of observability that incorporates the noise statistics $Q$ and $\sigma^2$ which are not considered in the \textit{Observability Gramian} defined in \eqref{eq:observability_gramian}. According to Theorem \ref{theorem:divergence_undetectable}, the maximum variance of the term $u_t\left(a \mid a_t\right)$ defined as $\tilde{u}$ in \eqref{eq:tilde_u_def} is zero if both \textbf{Condition 1} and \textbf{Condition 2} are not satisfied for an action pair $\left(a,a_t\right)$. With zero variance $\tilde{u}$, no perturbation for exploring action $a \in \mathcal{A}$ is added. Therefore, if action $a$ outputs the highest reward, then KODE's percent regret decreased is expected to be lower. This simulation study's objective is to test our null hypothesis: there does not exist a positive correlation between the magnitude of $\tilde{u}$ and the KODE's percent regret decreased with respect to comparison methods. 

In Figure \ref{figure:correlation_values}, we plot the Pearson $r$ correlation between KODE's percent regret decreased in Figure \ref{figure:regret_comparison} and $\log_{10}\tilde{u}$ where $\tilde{u}$ as defined in \eqref{eq:tilde_u_def}. We use $\log_{10} \tilde{u}$ since the range of $\tilde{u}$ is from $10^{-2}$ to $10^4$. Each bar is above $0.4$ correlation with $p$-values less than $10^{-2}$. Our null hypothesis states that there exists no correlation between KODE's percent regret decreased and $\tilde{u}$. However, with Pearson $r$ correlation values of at least $0.4$ and significant $p$-values, this suggests moderate statistical correlation between $\tilde{u}$ and KODE's percent regret decreased. 

The key takeaway with these results is that as the LGDS becomes less observable, the worse KODE performs (in terms of percent regret decreased). According to the Pearson $r$ correlations in Figure \ref{figure:correlation_values}, a smaller $\tilde{u}$ implies a decrease in KODE's percent regret decreased. This supports the findings of Theorem \ref{theorem:divergence_undetectable} which suggested that as the LGDS becomes less observable, less exploration for actions $a \in \mathcal{A}$ occurs. In consequence, KODE performs worse. 

\begin{figure}[t]
    \centering
    \includegraphics[width=\linewidth]{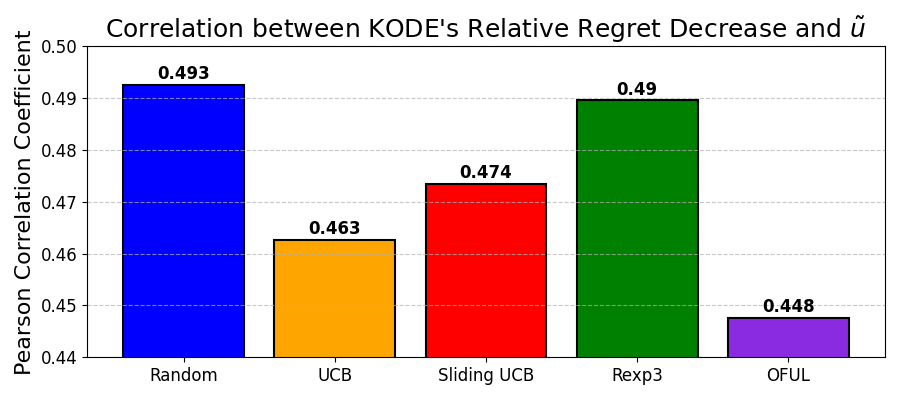}
    \caption{Comparison of KODE with various SMAB algorithms. The percentage indicates the reduction in regret achieved by KODE relative to the compared algorithm.}
    \label{figure:correlation_values}
\end{figure}

\section{Conclusion}\label{sec:Conclusion}

We presented a Stochastic Multi-Armed Bandit (SMAB) problem within a known Linear Gaussian Dynamical System (LGDS) environment. The goal was to maximize the cumulative reward over a horizon $n$, where the reward was the output of a LGDS. This works core contribution is the analysis of our proposed method, the Kalman filter Observability Dependent Exploration (KODE) algorithm---an exploration-free method---where the learner selects the action vector that aligns most closely with the Kalman filter state prediction, the optimal one-step state predictor of a LGDS. We provided bounds on the performance of KODE with respect to the \textit{Oracle}'s performance, where the bounds are dependent on the LGDS parameters. In our analysis, we discovered an implicit exploration term that promotes exploration in KODE depending on the observability of the LGDS. Finally, we validated the results through a simple numerical example.

\bibliographystyle{IEEEtran}
\bibliography{IEEEabrv,autosam}{}

\end{document}